\newtheorem{mytheorem}{Theorem}[section]
\newtheorem{mycorollary}{Corollary}[section]
\newtheorem{myclaim}{Claim}[section]
\begin{document}

\title{Learning Activation Functions: A new paradigm of understanding Neural Networks}

\author{\name Mohit Goyal \email goyal.mohit999@gmail.com \\
       \addr Electrical and Computer Engineering Department\\
       University of Illinois at Urbana Champaign\\
       Illinois, USA
       \AND
       \name Rajan Goyal \email rajangoyal910@gmail.com \\
       \addr Department of Electrical Engineering\\
       Indian Institute of Technology Delhi\\
      New Delhi, India
       \AND
       \name Brejesh Lall \email brejesh@ee.iitd.ac.in \\
       \addr Department of Electrical Engineering\\
       Indian Institute of Technology Delhi\\
      New Delhi, India}

\maketitle

\begin{abstract}
There has been limited research in the domain of activation functions, most of which has focused on improving the ease of optimization of neural networks (NNs). However, to develop a deeper understanding of deep learning, it becomes important to look at the non linear component of NNs more carefully. In this paper, we aim to provide a generic form of activation function along with appropriate mathematical grounding so as to allow for insights into the working of NNs in future. We propose ``Self-Learnable Activation Functions'' (SLAF), which are learned during training and are capable of approximating most of the existing activation functions. SLAF is given as a weighted sum of pre-defined basis elements which can serve for a good approximation of the optimal activation function. The coefficients for these basis elements allow a search in the entire space of continuous functions (consisting of all the conventional activations). We propose various training routines which can be used to achieve performance with SLAF equipped neural networks (SLNNs). We prove that SLNNs can approximate any neural network with lipschitz continuous activations, to any arbitrary error highlighting their capacity and possible equivalence with standard NNs. Also, SLNNs can be completely represented as a collections of finite degree polynomial upto the very last layer obviating several hyper parameters like width and depth. Since the optimization of SLNNs is still a challenge, we show that using SLAF along with standard activations (like ReLU) can provide performance improvements with only a small increase in number of parameters.
\end{abstract}

\begin{keywords}
  SLAF, Polynomial Approximation, SLNN, Activation functions
\end{keywords}
\noindent
An extension of this work has been published at IEEE WCCI 2020 and can be accessed \href{https://ieeexplore.ieee.org/abstract/document/9207535}{https://ieeexplore.ieee.org/abstract/document/9207535}. Our code is publicly available at \href{https://github.com/mohit1997/PAF}{https://github.com/mohit1997/PAF}.

\section{Introduction}

% {\color{red}
% How is it going to look like?---as follows:
% \begin{enumerate}
%     \item Aims of this manuscript are many.
%     \begin{enumerate}
%         \item What is the role of an activation function?
%         \item What is the best activation function? Is there any way to design the best activation funciton function. It leads to adaptive/learnable activation functions' paradime.
%         \item Cite all the relevant literature encompassing the broad overview of SLAFs and highlight the difficulties in training them. This gives out the first major contribution. A polynomial form of an activation function that practically works on many of the tasks giving approximately same or sometimes better performance.
%     \end{enumerate}
%     \item What can we do with Taylor form of SLAF?
%     \begin{enumerate}
%         \item No assumptions contributes to its elegance.
%         \item Explain the knob of non linearity.
%         \item How is it different from feature extraction. Experimentally show the differences.
%         \item Show the constraints on a NN activated with Taylor form of SLAF.
%     \end{enumerate}
% \end{enumerate}
% }

Better architectures, newer activations and faster optimization techniques being proposed everyday have fueled the huge success of deep learning. 
% In this paper, we study one specific block in neural networks (NNs) i.e. activation functions. Due to the efforts dedicated into activation functions, we have come from sigmoid which was biologically motivated, all the way to ReLUs (Rectified Linear Units).
Most of the past work on activations has been with the goal of achieving superior empirical performance with justifications that are more intuitive, lacking proper mathematical investigation. As far as the definition of activation functions is concerned, universal function approximation theorem (UFAT) lays out the most widely accepted one. It defines it as a \emph{``non constant, bounded and continuous function''}. While most of the activations conform to this criterion, NNs with linear units (unbounded) and its variants, still satisfy UFAT (\cite{unboundedproof}). The central goal in most of the application oriented research invloving neural networks (NNs) is to get higher performance which can be in terms of more complex underlying functions or NNs' better generalization over unseen data. The performance of NNs is heavily affected by the optimization method---stochastic gradient descent (SGD, \cite{sgd}), used for training them. It gives rise to a set of desirable properties for the activations which can potentially improve training time or guarantee convergence. Monotonic variation, zero centred nature, and appropriate gradient range are some of their widely studied properties. In this paper, we take a different route with objective of viewing activations from a theoretical perspective and characterize their role in NNs. 

%  Till recently most of the work has been on experimenting with activations chosen on the basis of superior performance and not mathematical investigation. 
 While employing NN models, it is indeed valid to ask which activation function would perform the best. Deep learning community has attempted to answer this problem in variety of ways, some people analyze the gradients of activations addressing the vanishing and exploding gradient problems. Some also try to study the variance arising in NNs due to the shape of activation functions, for example, ELU (\cite{elu}) saturates for the negative input values which contributes to its robustness to noisy inputs. A more rigorous way could be to do a grid search over all possible activations. \cite{swish} uses automated reinforcement learning based search on composite combination of existing activation functions. However, the resulting space of functions undergoing search is finite and therefore small. Here we want to develop analytic and more insightful ways to answer this question. We aim to formalize this concept of activation function without any prior assumptions or borrowing motivation from biology. Finally, what we want is a generalized form of activation function, spanning a much bigger space. Lastly, there remains one more factor which is relatively sparsely studied and needs to be accounted for---nature of the task and training data.
% CNNs combined with ReLUs usually work exceptionally well on images or structured data, similarly RNNs with hyperbolic tangents are advantageous when dealing with sequences. 
Ideally the NN should be able to incorporate the variability arising in a learning problem due to the type of task and properties of data via adaptation. The same should hold for the activation, but evidently not all perform equivalently on any of the task. Therefore, we propose approximating the optimal activation function (unique for each task and each data set) by assuming a functional form at each neuron with learnable parameters which are updated during training, thereby entering the domain of adaptive activation functions (AAFs). \cite{prelu, maxout, NinN, apla} represent some of the important milestones in the past research on AAFs. \cite{SAAF} proposes a piece-wise defined polynomial activation function specifically for regression tasks. It also highlights difficulties experienced while training NNs with AAF. If the AAF is too simplistic, it might not learn a good approximation of the optimal activation function (the one with smallest training error). On the other hand, a highly flexible activation function, with many parameters, will result in severe over-fitting. In light of the above issues, a major contribution of this paper is (i) a working adaptive activation function called \emph{`SLAF'} motivated from polynomial approximation of univariate continuous functions. With respect to optimization, it is difficult to train AAFs with non monotoic behaviour (\cite{tamingsine}). Hence, we also provide a model setup for SLAF which allows SLNNs to be trained with SGD, along with training routines for practical purposes. While we achieve similar performances on many of the tasks, and outperform existing activations on synthetic datasets, the contributions hold importance in an analytic sense. We provide interesting insights into the nature of neural networks activated with SLAF (SLNNs) along with (ii) mathematical bounds on number of parameters needed to represent an SLNN as a function of their degree (Theorem \ref{th: representaion of nn}). Since, SLNNs can be approximated to any other NN activated with conventional activation functions, characteristics of conventional NNs can be related to those of SLNNs'. We emphasize on the multilayered architecture of NNs (SLAF) by various experiments and provide theoretical grounds for our inferences. Importantly, we assume no constraints on our activation function except continuity and differentiability for generic characterization of activation function. Note: We do not claim UFAT holds for SLNNs, which theoretically is not disadvantageous. We show experiments on standard datasets to demonstrate the approximation capabilities of SLNNs.

% {\noindent \em Remainder omitted in this sample. See http://www.jmlr.org/papers/ for full paper.}

\section{Motivation and Model Setup}

This section describes the idea of defining best learnable activation as a good approximation of the optimal activation function on a predefined basis. Consider a neural network model with learnable activation F and for simplicity, let's assume that \(\hat{F}\) be its optimal function for the specific task and data distribution. \(\Tilde{F}\) denotes the projection of \(\hat{F}\) on a defined set of basis functions, with $N$ elements, represented as $ \{\phi_0, \phi_1, \ldots, \phi_{N-1}, \} $  (eq. \eqref{approximation equation}). Hence, for a fixed basis the whole problem of learning \(F\) boils down to learning the right set of coefficients of basis elements which post training should ideally converge to $\{a_0, a_1, \ldots, a_{N-1}\}$. 
\begin{equation} \label{approximation equation}
\hat{F} \approx a_0 \phi _0 + a_1 \phi _1 + \ldots + a_{N-1} \phi _{N-1} = \Tilde{F}
\end{equation}

\begin{figure}[!htb]
    \centering
    \includegraphics[width=0.7\textwidth]{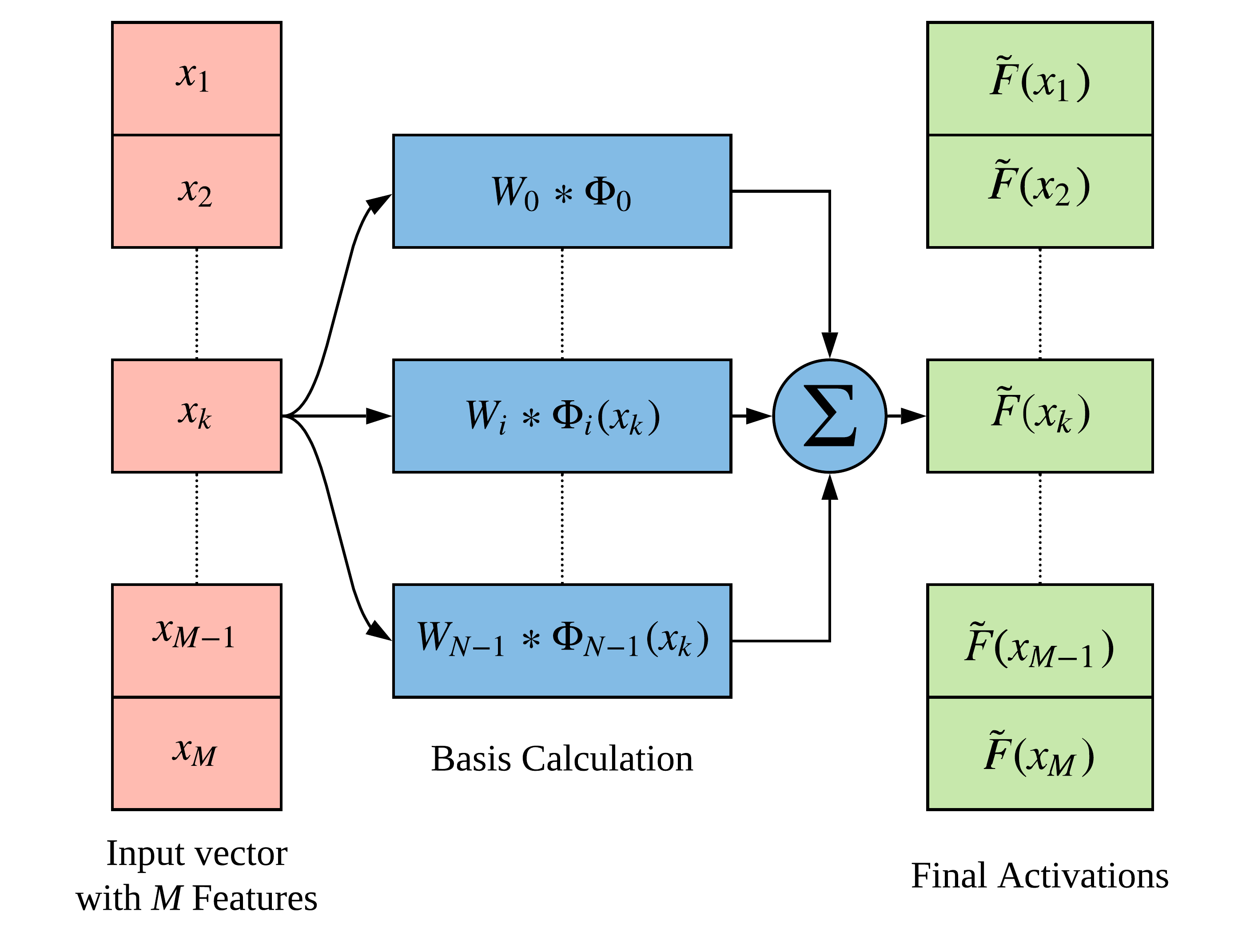}
    \caption{Illustration for a hidden layer activated with SLAF using $N$ basis elements. \(W_i\)\textit{s} are learnable parameters.}
        \label{fig:SLAFMODEL}
\end{figure}

Since the performance of resulting NN is contingent on how close \(\Tilde{F}\) and \(\hat{F}\) are, one would like to select a basis which can provide a good approximation of \(\hat{F}\) using only fewer number of basis elements, and thus reducing the computational complexity. Moreover, the goodness of basis will also depend upon how easily the SLNN can be optimized. In our work, we explore three different basis, viz. (i) Even Mirror Fourier (EMFN) Basis (\cite{emfn}) with truncated input, (ii) Taylor Polynomial Basis, and (iii) Chebyshev Basis. To compare these basis, we approximate existing activation functions ReLU, Sigmoid and tanh. It turns out that both EMFN basis and Taylor polynomial basis provide good enough approximation of existing activation functions and hence SLNN using either Taylor or EMFN basis would approximate these activations, if they were the optimal ones. The functional form of EMFN basis elements contain sinusoidal terms, thereby making it difficult to train using gradient descent (\cite{tamingsine}). Taylor polynomials have easier analytic form and its basis elements can be computed in polynomial time. Henceforth, we will only use Taylor approximation for all experimental and theoretical purposes. Figure \ref{fig:SLAFMODEL} shows how activations are calculated with SLAF on a hidden layer of standard NN.

\subsection{Using Taylor Polynomial Basis}
Although Taylor basis puts no restriction on the range of input \(x\), it can not be directly employed in a NN being optimized through SGD, because of the nature of gradients corresponding to each basis element. SLAF \(f(x)\) and its gradients can be written as
\begin{gather} \label{eq:taylor approximation 2}
    f(x) = a_0 + a_1x + a_2x^2 + \ldots + a_{N-1}x^{N-1}\\
    \nabla_x{f(x)} = a_1 + 2a_2x + \ldots + (N-1)a_{N-1}x^{N-2}, \; \frac{\partial f}{\partial a_i}(x) = x^i \label{eq:taylor gradient}
\end{gather}
The proportionality of gradient to \(x\) and its powers can lead to problems of exploding and vanishing gradients as the scale of input changes. This effect is more pronounced as the depth or the degree of SLNN increases. To handle this issue, we perform mean variance normalization on each basis function. The transformed basis functions \(\hat{x_i}\) are then used in SLAF:
\begin{gather}\label{eq:SLAFtaylornorm}
    \hat{x_i} = \frac{x^i - mean(x^i)}{\sqrt{var(x^i) + \epsilon}},\hspace{3mm} 
    i \in \{0, 1, 2, \ldots, k \}\\\label{eq:final taylor eq}
    f(x) = a_0' + a_1'\hat{x} + \ldots + a_{N-1}'\hat{x}^{N-1} = a_0 + a_1x + \ldots + a_{N-1}x^{N-1}
\end{gather}
where mean and variance are computed over the training data set. The coefficients ($a_i'$'s) can serve as means for recovering original mean and variance resulting in information preservation. The technique corrects the scale of very large or very small value of basis functions and helps in faster convergence (\cite{effbpp}). For CNN, \(x_i\)'s would denote a channel rather than one feature. We store the exponentially averaged means and variances calculated from training data statistics for performing normalization at test time similar to batch normalization (\cite{bnorm}).

\section{Representation of SLNN}\label{sec: representation}
It is first important to understand the mathematical proprieties of SLNNs in reference to normal NNs activated with generic activation functions. This allows us to quantify the theoretical differences between the learning capacity of both kinds of networks. We prove that for any neural network having only Lipschitz continuous activation function, there exists a corresponding SLNN which can completely approximate (each layer) the former to any arbitrarily low error (refer to appendix \ref{th: nn equivalence}). The proof in the appendix shows that for a fixed degree of SLAFs used in SLNN, the maximum approximation error would be directly proportional upon the magnitude of weights and the width of each layer. This explain two properties of neural networks, (i) increasing the width allows NNs to learn more complex tasks, (ii) L2 regularization which reduces the magnitude of parameters reduces the complexity of NN allowing for better generalization. More formally, higher magnitude of weights and larger width of layers, both map to higher degrees of SLAFs in the corresponding SLNN to maintain a fixed approximation error.

\noindent
One of the interesting outcome of employing polynomial activation is the resulting elegant polynomial representation of SLNN. Specifically, a fully connected NN activated with SLAF can be completely expressed as a collection of multivariate polynomial of input features up to the penultimate layer. The same representation would hold for the complete NN if the final activation is affine (regression tasks), however for classification tasks where the final activation is a sigmoid or a softmax, the claims on representation are valid only up to the second last layer of NN.

\textbf{Definitions:}
Consider a set \(\textbf{x} = \{x_1, ..., x_n\} \subseteq \mathbb{R}^n\). Let \(\mathcal{B}_k(\textbf{x})\) denote the set of elements of polynomial basis with degree \(k\) constructed using elements of set \(\textbf{x}\). Then,
\begin{equation*}
    \mathcal{B}_k(\textbf{x}) = \left\{\prod_{i=0}^{i=n}x_i^{\alpha_i}\; \middle|\ \; \sum_{i=1}^{i=n}\alpha_i = j, \:j \in \{0, 1, ..., k\} \right\}
\end{equation*}
We call \(\mathcal{B}_k(\textbf{x})\) as the polynomial basis defined over set \(\textbf{x}\) with degree \(k\).

\noindent
\emph{For the sake of readability and clarity, the proofs of the following theorems have been shifted to the appendix.}

\begin{theorem}\label{th: cardinality of basis}
The cardinality of \(\mathcal{B}_k(\textbf{\emph{x}})\), the set of elements of polynomial basis with degree \(k\) constructed using elements of set \(\textbf{x}\), denoted by \(N_{\mathcal{B}_k}\)  is equal to \(\binom{k + n}{n}\), where \(n\) is the cardinality  of set \(\textbf{\emph{x}}\).
\end{theorem}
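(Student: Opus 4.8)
The plan is to reduce the counting of basis elements to a standard stars-and-bars argument. First I would observe that each element of \(\mathcal{B}_k(\textbf{x})\) is a monomial \(\prod_{i=1}^{n} x_i^{\alpha_i}\) with \((\alpha_1, \ldots, \alpha_n) \in \mathbb{Z}_{\geq 0}^n\), and since the \(x_i\) are formal (algebraically independent) indeterminates, two such monomials coincide if and only if their exponent vectors coincide. Hence \(N_{\mathcal{B}_k}\) equals the number of distinct exponent vectors satisfying \(\sum_{i=1}^{n}\alpha_i = j\) for some \(j \in \{0, 1, \ldots, k\}\)---equivalently, the number of \((\alpha_1, \ldots, \alpha_n) \in \mathbb{Z}_{\geq 0}^n\) with \(\sum_{i} \alpha_i \leq k\).

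Next I would convert the inequality constraint into an equality by adjoining a slack variable. Setting \(\alpha_0 := k - \sum_{i=1}^{n}\alpha_i\), the condition \(\alpha_0 \geq 0\) holds precisely when \(\sum_i \alpha_i \leq k\), and the map \((\alpha_1, \ldots, \alpha_n) \mapsto (\alpha_0, \alpha_1, \ldots, \alpha_n)\) is a bijection onto the set of \((n+1)\)-tuples of non-negative integers summing to exactly \(k\). Counting the latter is the classical stars-and-bars problem---distributing \(k\) indistinguishable units among \(n+1\) bins---which has \(\binom{k + (n+1) - 1}{(n+1) - 1} = \binom{k+n}{n}\) solutions, giving the claimed cardinality directly.

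As an alternative and a useful sanity check, I could instead count degree by degree: the number of exponent vectors with \(\sum_i \alpha_i = j\) exactly is \(\binom{j + n - 1}{n-1}\), and then summing \(\sum_{j=0}^{k} \binom{j+n-1}{n-1} = \binom{k+n}{n}\) via the hockey-stick identity recovers the same answer. I would favour the slack-variable route in the writeup, since it needs only a single application of stars and bars and avoids invoking a summation identity.

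I do not anticipate a genuine obstacle here, as the statement is elementary enumerative combinatorics. The only point requiring care is the first step---establishing that distinct exponent vectors yield distinct basis elements, so that \(\mathcal{B}_k(\textbf{x})\) is counted with no collisions. This hinges on treating \(x_1, \ldots, x_n\) as independent formal variables rather than as specific real values (for which accidental coincidences among monomials could in principle arise). Once that bijection between basis elements and bounded-weight exponent vectors is in place, the count is immediate.
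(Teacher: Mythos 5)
Your proof is correct, and your preferred route differs from the paper's in a small but genuine way. The paper counts the solutions of $0 \leq \alpha_1 + \cdots + \alpha_n \leq k$ degree by degree, obtaining $\binom{j+n-1}{n-1}$ solutions for each fixed total degree $j$ by stars and bars, and then evaluates $\sum_{j=0}^{k}\binom{j+n-1}{n-1}$ by writing $\binom{a}{b} = \binom{a+1}{b+1} - \binom{a}{b+1}$ and telescoping --- i.e.\ exactly the ``alternative sanity check'' you mention, with the hockey-stick identity proved inline rather than cited. Your primary route instead adjoins the slack variable $\alpha_0 = k - \sum_i \alpha_i$ and applies stars and bars once to $(n+1)$-tuples summing to $k$; this is shorter, avoids the summation identity entirely, and makes the appearance of $n+1$ in $\binom{k+n}{n}$ conceptually transparent (it is the extra bin). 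The paper's version has the minor compensating virtue that the intermediate count $|\mathcal{M}_j(\textbf{x})| = \binom{j+n-1}{n-1}$ of each monomial stratum is exhibited explicitly, which connects to the monomial sets $\mathcal{M}_j$ used in the subsequent results. You are also right to flag that the count presupposes distinct exponent vectors give distinct basis elements; the paper takes this for granted, and your remark that the $x_i$ must be treated as formal indeterminates is a legitimate point of care that the paper omits.
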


% \begin{theorem}\label{th: monomial cross product}
% If \(\mathcal{M}_{k_1}(\textbf{\emph{x}})\) and \(\mathcal{M}_{k_2}(\textbf{\emph{x}})\) denote monomial set on \(\textbf{\emph{x}}\) with degree \(k_1\) and \(k_2\) respectively, then the cartesian product of \(\mathcal{M}_{k_1}(\textbf{\emph{x}})\) and \(\mathcal{M}_{k_2}(\textbf{\emph{x}})\), denoted as \(M_{k_1} \times M_{k_2}\) which contains all possible products of elements from $M_{k_1}$ with elements of set $M_{k_2}$, results in monomial set \(\mathcal{M}_{k_1 + k_2}(\textbf{\emph{x}})\) with degree \(k_1+k_2\).
% \end{theorem}

% \begin{corollary}\label{th: prod of basis}
% If \(\mathcal{B}_{k_1}(\textbf{\emph{x}})\) and \(\mathcal{B}_{k_2}(\textbf{\emph{x}})\) denote basis set on \(\textbf{\emph{x}}\) with degree \(k_1\) and \(k_2\) respectively, then the Cartesian product of \(\mathcal{B}_{k_1}(\textbf{\emph{x}})\) and \(\mathcal{B}_{k_2}(\textbf{\emph{x}})\) results in another basis set \(\mathcal{B}_{k_1 + k_2}(\textbf{\emph{x}})\) with degree \(k_1+k_2\).
% \end{corollary}

Let \(\textbf{X}^{\mathcal{B}_k}_{N_{\mathcal{B}_k}\times 1}\) denote the \(N_{\mathcal{B}_k}\times1\) matrix containing the elements of set \(\mathcal{B}_k(\textbf{x})\). Now we show that any general neural network having a structure as mentioned below can be completely represented in a polynomial form.

\begin{theorem}\label{th: representaion of nn}

Consider an SLNN with $H$ hidden layers and input denoted as $X_{n \times 1}$ and output as $\textbf{Y}_{m \times 1}$. If the activation at the final layer is linear and all the hidden layers are activated with SLAF of degree $k_i$, where $i$ is the index of the hidden layer. Then, the output of this NN can be reparametrized and written as 
\begin{equation}
    \textbf{Y}_{m\times 1} = \textbf{W}_{m\times N_{\mathcal{B}_k}}\textbf{X}^{\mathcal{B}_k}_{N_{\mathcal{B}_k}\times 1}
\end{equation}
where, \(k = \prod_{i=1}^{H}k_i\), called as degree of SLNN, \(\textbf{W}_{m \times N_{\mathcal{B}_k}}\) are the new parameters and \(\textbf{X}^{\mathcal{B}_k}_{N_{\mathcal{B}_k}\times 1}\) is the vector containing polynomial features of degree $\leq k$ in $X_{n \times 1}$. The subscripts in the notation denote matrix size. \\ \textbf{Note:} SLAF with degree equal to one is equivalent to linear/no activation. Therefore, this result directly holds for regression tasks.
\end{theorem}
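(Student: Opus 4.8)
The plan is to prove the statement by induction on the hidden layers, using the single key algebraic fact that SLAF is itself a univariate polynomial and that substituting a polynomial into a polynomial multiplies the total degrees. First I would fix notation: write the pre-activation vector at hidden layer $i$ as an affine map of the previous layer's activations, and write the SLAF at layer $i$ as $f_i(z) = \sum_{j=0}^{k_i} a_j^{(i)} z^j$, a univariate polynomial of degree $k_i$ applied componentwise. The inductive invariant I want to maintain is: after hidden layer $i$, every component of the activation vector is a multivariate polynomial in the input features $X_1,\ldots,X_n$ of total degree at most $d_i := \prod_{l=1}^{i} k_l$.

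For the base case ($i=1$), each pre-activation is an affine combination of the inputs, hence a degree-$1$ polynomial in $X$; applying $f_1$ of degree $k_1$ raises the top degree to $k_1 = d_1$, while the lower-order terms of $f_1$ and any bias stay within degree $\le k_1$. For the inductive step I would assume each component after layer $i$ lies in the space $P_{d_i}$ of polynomials of total degree $\le d_i$; forming the affine recombination at layer $i+1$ (a weighted sum plus a constant bias) keeps every entry in $P_{d_i}$, since summing degree-$\le d_i$ polynomials and adding a degree-$0$ term cannot exceed degree $d_i$. Applying $f_{i+1}$ then raises a degree-$d_i$ polynomial to powers $j \le k_{i+1}$, yielding total degree $\le d_i k_{i+1}$, so the SLAF output lies in $P_{d_{i+1}}$ with $d_{i+1} = d_i k_{i+1}$. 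After the last hidden layer this gives degree $\le \prod_{l=1}^H k_l = k$, and because the final activation is linear, the outputs $\mathbf{Y}$ are affine combinations of degree-$\le k$ polynomials and hence themselves lie in $P_k$.

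The final step is to recast the conclusion ``$\mathbf{Y}$ is a vector of degree-$\le k$ polynomials in $X$'' into the stated matrix form. By definition $\mathcal{B}_k(\mathbf{x})$ is exactly the collection of all monomials of total degree $\le k$ in the $n$ inputs, so it is a spanning set (indeed a basis) of $P_k$; therefore each coordinate $Y_r$ has a unique expansion as a linear combination of the entries of $\mathbf{X}^{\mathcal{B}_k}$. Collecting these coefficients row by row produces the reparametrized matrix $\mathbf{W}_{m \times N_{\mathcal{B}_k}}$, and the count $N_{\mathcal{B}_k} = \binom{k+n}{n}$ follows directly from Theorem \ref{th: cardinality of basis}. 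The note is immediate: an SLAF of degree one contributes a factor $k_i = 1$ to the product $k$ and reduces to an affine map, so the argument specializes correctly to regression-style networks.

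I expect the only genuine subtlety — rather than a true obstacle — to be the bookkeeping in the inductive step: one must check that \emph{both} operations at each layer respect the degree bound, and in particular that the affine recombination, despite mixing many polynomials of degree $\le d_i$, cannot push the total degree above $d_i$. The existence of $\mathbf{W}$ is automatic from the spanning property of $\mathcal{B}_k$, so I would deliberately avoid writing its entries in closed form; doing so would require expanding nested compositions of the SLAF coefficients $a_j^{(i)}$ with the layer weights, which is irrelevant to the representation claim itself.
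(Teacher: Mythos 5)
Your proposal is correct and follows essentially the same route as the paper: an induction over layers tracking the polynomial degree (the paper phrases the inductive step via its Claim on composing a polynomial with $x^{k_2}$, giving $d_{i+1}=d_i\cdot k_{i+1}$), followed by collecting the coefficients of the monomials in $\mathcal{B}_k$ into the matrix $\textbf{W}$. Your version is in fact slightly more careful in stating the degree as an upper bound $\le d_i k_{i+1}$ rather than an equality, and in explicitly checking that the affine recombination step preserves the bound, but the underlying argument is the same.
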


Theorem \ref{th: representaion of nn} shows that the output of SLNN can be easily represented as a collection of polynomials with degree defined by SLAF applied at each layer of  NN. Note that the learnable parameters corresponding to all the linear operations and activation are absorbed into one single matrix given by \(\textbf{W}_{m \times N_{\mathcal{B}_k}}\). As a result, theoretically training a SLNN becomes equivalent to finding out the optimal value of the new weight matrix. Although we do not provide a generalization of the above theorem for classification tasks, it remains valid up to the penultimate layer of a SLNN (with softmax or sigmoid at the final layer). It is easy to see that the resulting reparametrized form would be equivalent to a linear classifier over \(\textbf{X}^{\mathcal{B}_k}_{N_{\mathcal{B}_k}\times 1}\) and polynomial regression for classification and regression tasks respectively. The direct result of this reparametrization is a hard upper bound on total number of variables that the SLNN will have, given by $m\cdot N_{\mathcal{B}_k}$, where $m$ is one for regression or the number of classes for classification. One should be careful that this bound is only due to the theoretical equivalence and ignores the effect of optimization algorithm. \emph{Different parametrization (\cite{weightshare}), normalization techniques (\cite{bnorm}) can hugely affect the empirical performance of NNs (SLNNs here).}

Though it might seem that SLNNs are redundant for regression tasks and one would prefer performing polynomial regression which is simpler and much more efficient, the scalability issues severe its compatibilty with high dimensional data. As the degree increases, their is a surge in number of features which bottlenecks its practical applicability. Fortunately this is much simpler in SLNNs, where the polynomial representation is implicitly learned relaxing computational issues with memory limitations. 

\section{Experiments}\label{sec: experiments}
To demonstrate the effectiveness of SLNN and gauge its performance with BP algorithm, we perform experiments on regression, classification and learning sparse polynomials. In all the experiments, we apply L2 regularization penalty on the activation coefficients.

% This section discusses and explains the experiments performed with SLNNs and compare them with NNs activated with ReLU activation. Note that to reduce overfitting, we employ L2 regularization penalty on the activation coefficients which is a standard practice when using polynomial regression.

\subsection{SLNN as Polynomial regression}
The section empirically validates the claim of theorem \ref{th: representaion of nn} on both regression and classification tasks. In addition, it also shows the effectiveness of SLNN in learning sparse polynomials. 
% To demonstrate the effectiveness of SLNN and gauge its performance with BP algorithm, we design experiments on regression, classification and learning sparse polynomials.
\begin{enumerate}
    \item \textbf{Regression - Boston Housing:} In a regression setting, an SLNN can be completely reduced to a polynomial. Hence, it becomes equivalent to a linear regression applied on the polynomial basis of original input features. We compare the performance of four algorithms to verify the empirical validity of the above statement taking into account the optimization method as well. For this experiment, we take boston housing dataset, which has thirteen features originally and the output is to predict the house price. We test the following algorithms: (i) Standard Neural Network with two hidden layers each activated with ReLU and accelerated with batch normalization \emph{(NNRELUBN)} along with L2 regularization (ii) SLNN  with two hidden layers each activated with SLAF of degree four and optimized with SGD \emph{(SLNN)} (iii) Linear Regression on Polynomial features with SGD \emph{(LRSGD)} with L1 regularization (iv) Lasso Linear Regression \emph{(LLS)} optimized with coordinate descent.
    
    \begin{table}[!htbp]
    \centering
\begin{tabular}{|c|c|c|c|}
\hline
\textbf{Algorithm} & \textbf{Degree/Description} & \textbf{Training RMSE} & \textbf{Testing RMSE} \\ \hline
\textit{NNRELUBN}  & 2 Hidden Layers             & 1.32                  & 3.78                  \\ \hline
\textit{SLNN}      & $k=8$, $k_1=4$, $k_2=2$     & 2.09                   & 3.98                  \\ \hline
\textit{LRSGD}     & Degree=8 , Penalty=0.01                   & 22.03                  & 22.69                 \\ \hline
\textit{LLS}       & Degree=8, Penalty=0.01                    & 1.59                   & 3.06                  \\ \hline
\end{tabular}
\caption{Comparision of the four algorithms on boston housing dataset. NNRELUBN with model similar to SLNN provides a baseline for comparison with other methods. Note: We use Adam optimizer in place of the vanilla SGD for optimization. \emph{RMSE stands for root mean squared error.}}
\end{table}
    As a result of theorem \ref{th: representaion of nn}, in all three methods \emph{SLNN, LRSGD,} and \emph{LLS}, the underlying representation is same. However, the performance of LRSGD is quite poor as compared to the other two algorithms. Being theoretically same, the global minima of LRSGD is the same as of the other two.
    % The question naturally arises that why does SLNN works better than LRSGD even with same optimizer along with LRSGD having a simpler architecture. 
    The reason is the sub-optimality of SGD in converging to the global minima. Since, SLNN has more number of parameters than those required for its polynomial representation, the number of global minimas are more in space of learnable parameters. While on the other hand, LRSGD has exactly the same number of parameters needed for its representation. Due to this redundancy in the network parameters (\cite{globalminima}), SLNN tends to easily converge with SGD and exhibit performance similar to LLS (derivative-free optimization method). One should note that both LLS and LRSGD are not scalable unlike SLNN with higher dimensional inputs (Theorem \ref{th: cardinality of basis}).
    
    \item \textbf{Classification - Two Spiral:} In the previous case, we observed that LLS and SLNN perform similar in terms of the test error, where LLS converges much faster. However, for classification tasks, we observe that logistic regression over the polynomial basis combined with other optimization methods (\cite{sklearn}) doesn't turn out to be as beneficial. At the same time, SLNN performs significantly better and converges most of the times. Hence, SLNN doesn't limit learning a polynomial feature space even in a classification setting and therefore bcomese advantageous. To demonstrate the same, we employ two spiral classification problem tested with (i) conventional \emph{NN} with batch normalization and two hidden layers with ReLU activation, (ii) \emph{SLNN} with two hidden layers each with SLAF activation of degree seven each, and (iii) Logistic Regression on polynomial basis of degree fourteen with SAGA (\cite{saga}) optimization \emph{(LRSAGA)}. We skip results of other approaches for lack of relevance. Note that we specifically choose a smaller two layered architecture to demonstrate how a higher degree SLAF can compensate for deeper/wider NN with the same number of parameters. Training and Test data-set are synthetically generated and randomly chosen. Figure \ref{fig: tsneclass} and \ref{fig: tsneNN} shows the classification boundary learnt by SLNN and NN, respectively. We can see that due to the underlying assumptions on the activation function's differentiability, the resulting boundary learned is itself smooth and provides good generalization (extending both the spirals will decrease classification accuracy in the case of NN). However, the boundary learnt by NN with ReLU activation displays sharp turns and aesthetically unpleasing boundary. This is again due to form of ReLU which has a bent at origin.
% \begin{figure}[!htbp]
%   \centering
  \begin{figure}[!htbp]
    \centering
    \includegraphics[width=0.45\textwidth]{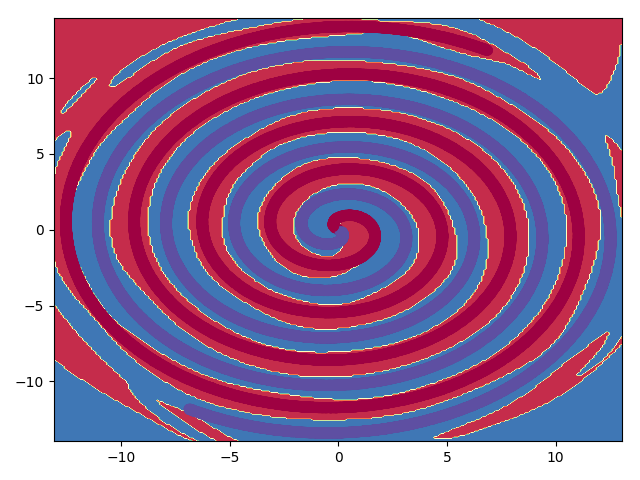}
    \caption{SLNN's ($k=14$) classification map}
    \label{fig: tsneclass}
  \end{figure}
  \begin{figure}[!htbp]
  \centering
    \includegraphics[width=0.45\textwidth]{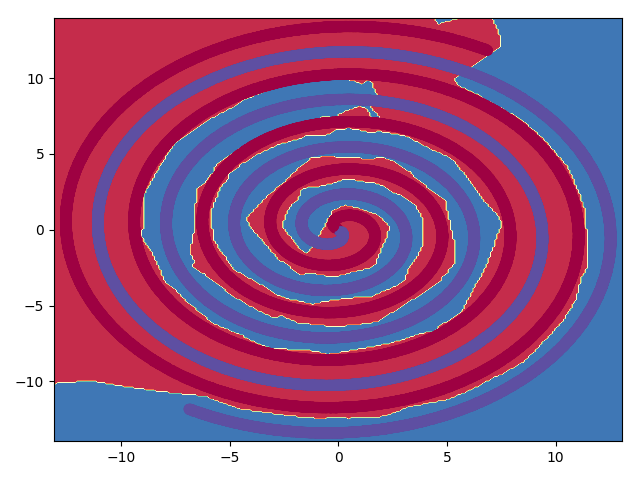}
    \caption{Standard NN's classification}
    \label{fig: tsneNN}
  \end{figure}
%   \caption{(a) SLNN's ($k=14$) classification map (b) Standard NN's classification map.}
%   \label{fig: Illustration for two sprial}
% \end{figure}

\begin{table}[!htbp]
\centering
\begin{tabular}{|c|c|c|}
\hline
\textbf{Algorithm} & \textbf{Train Accuracy (\%)} & \textbf{Test Accuracy (\%)} \\ \hline
\textit{NN}    & 80.14                  & 80.98                  \\ \hline
\textit{SLNN}  & 99.41                   & 99.69                 \\ \hline
\textit{LRSAGA}  & 74.69                  & 74.60                 \\ \hline
\end{tabular}
\caption{Comparison}
\end{table}

    \item \textbf{Regression - Learning Sparse Polynomial:}
    Now we shift our focus to the task of learning sparse polynomials (which have small number of monomial terms with non zero coefficients). NNs have been theoretically studied to estimate their ability to approximate polynomials \cite{polyapprox}. \cite{nnpolylearn} proves that irrespective of an activation function, a single layered neural network can learn k-sparse polynomial (with k monomial terms) of small degrees in finite iterations with appropriate number hidden nodes. On similar lines we design experiments to show that, practically, the choice of activation affects the generalization of the neural network on unseen data points. We experiment with polynomials of degrees three and four on hundred variables (with standard normal distribution) having ten monomial terms. A three layered architecture will be used for this experiment. NN uses hyperbolic tan as activation, accelerated with batch normalization, and SLNN uses same architecture with SLAF activation. We employ L1 regularization on the first layer to take into account the sparsity for both the models. Note that here we do not use the same activation weights/coefficients for each hidden node across a layer.

\begin{table}[!htbp]
\centering
\label{tab: polyres}
\begin{tabular}{|c|c|c|c|}
\hline
\textbf{Model}                          & \textbf{Degree} & \textbf{Training MSE} & \textbf{Testing MSE} \\ \hline
\multirow{2}{*}{\textit{\textbf{NN(Tanh/ReLU)}}}   & 3               & 0.06/0.03                  & 2.25/0.40                 \\ \cline{2-4} 
                                        & 4               & 0.12/0.12                  & 16.24/12.90                \\ \hline
\multirow{2}{*}{\textit{\textbf{SLNN}}} & 3               & 0.03                  & 0.03                 \\ \cline{2-4} 
                                        & 4               & 0.03                  & 0.03                 \\ \hline
\end{tabular}
\caption{Comparing NN and SLNN on learning polynomials.}
\end{table}

\end{enumerate}
\subsection{Standard Classification Tasks}
Since the basis chosen restricts the subspace of activations to only polynomial approximations of finite degrees, it might seem that the model capacity would be greatly reduced. Even though it is a challenge to optimize SLNNs with SGD, we show that SLNNs can perform considerably similar even with finite degree polynomial representations. In this section, we test and compare the performance of SLNNs on standard classification datasets to NNs activated with ReLU activation. Note that we want to showcase the approximation power of SLNNs, therefore avoid experimenting with other activations. 
\begin{enumerate}
    \item \textbf{MNIST}: MNIST is standard hand digit image classification dataset. We experiment with a custom convolutional neural network (NNRELU) with two convolutional layers involving ReLU activation, batch normalization and maxpooling followed by two fully connected layers, the latter one being a standard softmax layer. We replace all the RelU activations with SLAF and call that model as SLNN. 
    \begin{table}[!htbp]
\centering
\begin{tabular}{|c|c|c|}
\hline
\textbf{Algorithm} & \textbf{Degree} & \textbf{Test Accuracy (\%)} \\ \hline
\textit{NNRELU}    & -                  & 99.34                  \\ \hline
\textit{SLNN}  & $2 \times 3 \times 3$                  & 99.55                 \\ \hline
\end{tabular}
\caption{Comparison on MNIST Dataset}
\end{table}
    \item \textbf{CIFAR-10:} This is another image classification dataset consisting of 60000 images labeled in one of the ten classes. We use Resnet architecture with ReLU activation. We show two variants of Resnet where the activation of first layer is replaced by SLAF of degree two, and another where all the activations are replaced with SLAFs of degree two.
    
    \begin{table}[!htbp]
\centering
\begin{tabular}{|c|c|c|c|c|}
\hline
\textbf{Architecture} & \textbf{\# Layers} & \textbf{\begin{tabular}[c]{@{}l@{}}\# Activation functions \\ replaced by SLAF\end{tabular}} & \textbf{\# Parameter} & \textbf{Error(\%)} \\ \hline
\textbf{ResNet}       & 32               & -                                                                                            & 0.46M                 & 7.51               \\ \hline
\textbf{ResNet}       & 44               & -                                                                                            & 0.66M                 & 7.17               \\ \hline
\textbf{ResNet}       & 32               & 1 (SLAF $k=2$)                                                                              & $\sim$0.46M           & \textbf{7.12}      \\ \hline
\textbf{ResNet}       & 32               & 31 (SLAF $k=2$)                                                                              & $\sim$0.46M           & 8.50               \\ \hline
\end{tabular}
\caption{Testing error on CIFAR-10 using different architectures and activation functions. $k$ is the order of Taylor series used.}
\label{tab:Resnet accuracy}
\end{table}
    \item \textbf{Fashion MNIST: } This is another benchmarking dataset developed as drop in replacement of MNIST dataset. We use a small residual network with two residual blocks and two fully connected layers followed by a softmax layer. NNRELU uses only ReLU activation at all the layers where as SLNN uses SLAF of degree 2 at each layer. We also consider the case where only the final activation is replaced by SLAF (NN\emph{(ReLU+SLAF)}).
        \begin{table}[!htbp]
\centering
\begin{tabular}{|c|c|c|c|}
\hline
\textbf{Algorithm} & \textbf{Degree} & \textbf{Architecture} & \textbf{Acc (\%)} \\ \hline
\textit{NNRELU}    & -                  & 2 Conv + 2 Res. Blocks + 2 Fc                  & 93.56                  \\ \hline
\textit{SLNN}  & $2^8$                  & 2 Conv + 2 Res. Blocks + 2 Fc                  & 92.97                 \\ \hline
\textit{NN(ReLU+SLAF)}  & 2*                  & 2 Conv + 2 Res. Blocks + 2 Fc(ReLU, SLAF)                  & \textbf{93.71}                 \\ \hline
\end{tabular}
\caption{Comparison on Fashion MNIST Dataset with no data augmentation.\\ \emph{* Only describes the product of degrees of SLAFs used and not for the entire NN.}}
\end{table}
\end{enumerate}
\section{Discussion}
We perform experiments on three standard benchmarking datasets---MNIST, FMNIST, and Cifar10 with same architecture but different activations. The NN with all ReLU activations turns ranks second among three approaches chosen for experimentation. On the other hand NN with all SLAF activations gives approximately similar classification accuracies. According to authors, this can be attributed to the optimization difficulties experienced while training SLNNs due to non monotonic and unbounded nature of SLAF. Also, unlike other activations, the polynomials have higher degree terms which causes the input to grow at a very large rate. This results in the exploding activations on test data, which is somewhat minimzed with L2 regularization applied on the network and activation weights. The third method which involves replacing only one ReLU activation with SLAF provides incremental improvement on classification accuracy. Since the SLAF activation is adaptive, the regularization of activation coefficients is observed to minimze overfitting (empirically verified). Note that we don't experiment with the third method on MNIST dataset because the architecture used is much smaller thereby making the optimization of SLNN (all activations replaced) easier and therefore rendering the third one irrelevant. 

One might expect that employing SLAF activation of higher degrees can compensate for more number of layers in the deep neural networks. However, practically we observe that doubling the degree doesn't yield the same performance as adding one layer with SLAF of degree two does. We only provide an intuitive explanation here based on the assumption that increasing the number of global minimas in the parameter space allows SGD to converge to one of them (\cite{globalminima}). Consider a NN with $M$ inputs and with only one hidden layer having $N$ hidden units activated with SLAF of degree four s.t $N>\binom{M+4}{M}$. The output is weighted sum of the activations. If the hidden layer is replaced with two hidden layers each with $N$ hidden nodes having SLAF activation of degree two. The number of extra parameters introduced in the newer architecture would be $N^2$ (much greater than the coefficients for polynomial representation) while the underlying representation is same i.e. a polynomial of degree four. This must mean that any polynomial of degree four must have more than one configuration (such that the resulting polynomial has the same coefficient values) of the SLNN. This implies that the duplicates of the global minima (which will also be a polynomial) are introduced in the newer parameter space thereby making optimization easier.

\section{Conclusion}

We present a new form of activation function which is motivated from polynomial approximation of univariate functions. The activation is learned during training while searching a space of finite degree polynomials. We provide in depth analysis of NNs activated with polynomial activation referred to as SLAF while providing the bounds on the number of parameters of SLNN theoretically required for its underlying polynomial representation. Finally, we show that SLNNs perform at par with standard NNs with experimentation on standard benchmarking datasets. In the end, we provide an intuitive explanation of how different parametrization of SLNNs improve the empirical performance possibly due to properties of SGD algorithm.
% Acknowledgements should go at the end, before appendices and references
% \acks{Dr. Brejesh Lall}

% Manual newpage inserted to improve layout of sample file - not
% needed in general before appendices/bibliography.

\vskip 0.2in
\bibliography{sample}

\appendix
\section{}
\label{app:theorem}

% Note: in this sample, the section number is hard-coded in. Following
% proper LaTeX conventions, it should properly be coded as a reference:

%In this appendix we prove the following theorem from
%Section~\ref{sec:textree-generalization}:
\noindent

\textbf{Assumptions:}
Consider a set \(\textbf{x} = \{x_1, ..., x_n\} \subseteq \mathbb{R}^n\). Let \(\mathcal{B}_k(\textbf{x})\) denote the set of elements of polynomial basis with degree \(k\) constructed using elements of set \(\textbf{x}\). Then,
\begin{equation*}
    \mathcal{B}_k(\textbf{x}) = \left\{\prod_{i=0}^{i=n}x_i^{\alpha_i}\; \middle|\ \; \sum_{i=1}^{i=n}\alpha_i = j, \:j \in \{0, 1, ..., k\} \right\}
\end{equation*}
We call \(\mathcal{B}_k(\textbf{x})\) as the basis set on \(\textbf{x}\) having degree \(k\). We also define monomial set \(\mathcal{M}_k(\textbf{x})\) on \(\textbf{x}\) as the set which contain all monomials of degree \(k\), i.e., 
\begin{equation*}
    \mathcal{M}_k(\textbf{x}) = \left\{\prod_{i=0}^{i=n}x_i^{\alpha_i}\; \middle|\ \; \sum_{i=1}^{i=n}\alpha_i = k\right\}
\end{equation*}
Clearly, \(\mathcal{B}_k(\textbf{x}) = \bigcup\limits_{i=0}^{i=k}\mathcal{M}_i(\textbf{x})\), where \(\bigcup\) denotes the union operator over sets.

\begin{mytheorem}\label{thcp: cardinality of basis}
The cardinality of \(\mathcal{B}_k(\textbf{\emph{x}})\), the set of elements of polynomial basis with degree \(k\) constructed using elements of set \(\textbf{x}\), denoted by \(N_{\mathcal{B}_k}\)  is equal to \(\binom{k + n}{n}\), where \(n\) is the cardinality  of set \(\textbf{\emph{x}}\).

\end{mytheorem}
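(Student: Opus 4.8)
The plan is to reduce the claim to a standard lattice-point count. Each element of \(\mathcal{B}_k(\textbf{x})\) is a monomial \(\prod_{i=1}^{n} x_i^{\alpha_i}\) completely determined by its exponent vector \((\alpha_1, \ldots, \alpha_n)\) of nonnegative integers subject to \(\sum_{i=1}^{n}\alpha_i \le k\). Treating the \(x_i\) as formal indeterminates, distinct exponent vectors yield distinct monomials, so the map from exponent vectors to basis elements is a bijection and \(N_{\mathcal{B}_k}\) equals the number of nonnegative integer solutions of \(\alpha_1 + \cdots + \alpha_n \le k\). The first step is therefore simply to record this correspondence and restate the problem as this counting problem.

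Next I would eliminate the inequality by introducing a slack coordinate. Setting \(\alpha_{n+1} = k - \sum_{i=1}^{n}\alpha_i\), the condition \(\sum_{i=1}^{n}\alpha_i \le k\) with all \(\alpha_i \ge 0\) becomes equivalent to \(\sum_{i=1}^{n+1}\alpha_i = k\) with all \(\alpha_i \ge 0\), and this assignment is a bijection onto the solution set of the equality. This converts the ``degree at most \(k\)'' problem into an exact composition problem in one additional variable. I would then finish with the stars-and-bars count: the number of ways to distribute \(k\) indistinguishable units among \(n+1\) nonnegative slots is \(\binom{k + (n+1) - 1}{(n+1) - 1} = \binom{k+n}{n}\), giving exactly the claimed value.

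As an alternative that leverages the decomposition \(\mathcal{B}_k(\textbf{x}) = \bigcup_{i=0}^{k}\mathcal{M}_i(\textbf{x})\) already noted in the assumptions, I could instead count each fixed-degree layer separately. Since this union is disjoint (monomials of different total degree are distinct), I have \(|\mathcal{M}_j(\textbf{x})| = \binom{j+n-1}{n-1}\) by stars and bars applied to \(\sum_{i=1}^{n}\alpha_i = j\), and then summing over layers using the hockey-stick identity \(\sum_{j=0}^{k}\binom{j+n-1}{n-1} = \binom{k+n}{n}\) yields the result; this identity itself follows from a one-line induction on \(k\) via Pascal's rule.

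\textbf{Main obstacle.} The argument is essentially routine, so there is no deep difficulty; the only points genuinely requiring care are (i) justifying that the exponent-vector-to-monomial map is injective, so that the cardinality of the basis really equals the number of solution vectors rather than being smaller through coincidental collisions, and (ii) verifying the summation identity in the alternative route. I expect the slack-variable bijection to be the cleanest path, since it reduces the entire claim to a single application of stars and bars with minimal bookkeeping.
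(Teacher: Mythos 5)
Your proposal is correct, and your preferred route is genuinely different from the paper's. The paper's proof follows exactly what you describe as your \emph{alternative}: it splits the inequality $0 \le \alpha_1 + \cdots + \alpha_n \le k$ into the $k+1$ equalities $\sum_i \alpha_i = j$, counts each layer as $\binom{j+n-1}{n-1}$ by stars and bars, and evaluates $\sum_{j=0}^{k}\binom{j+n-1}{n-1}$ by telescoping with the Pascal-type recurrence $\binom{a}{b} = \binom{a+1}{b+1} - \binom{a}{b+1}$ (i.e., the hockey-stick identity; the paper's displayed telescoping line contains a slip, writing $F(k+1)-F(k)$ where $F(k+1)-F(0)$ is meant, but since $F(0)=\binom{n-1}{n}=0$ the conclusion is unaffected). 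Your primary slack-variable argument avoids the summation entirely by converting ``degree at most $k$'' into a single exact composition $\sum_{i=1}^{n+1}\alpha_i = k$ and applying stars and bars once; this is shorter and removes the only step in the paper's proof where an error could creep in. What the paper's layered decomposition buys in exchange is the explicit count $|\mathcal{M}_j(\textbf{x})| = \binom{j+n-1}{n-1}$ of each fixed-degree monomial set, which is the object reused in Theorem \ref{thcp: monomial cross product} and its corollary; your injectivity caveat (i) is also worth stating, and neither proof makes it explicit.
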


\begin{proof}\label{proof_thcp: cardinality of basis}
To find the cardinality of the set $\mathcal{B}_k$, consider the following inequality,

\begin{equation}
\label{eq:cardinality}
    0 \leq \alpha_1 + \alpha_2 +\ldots + \alpha_n \leq k
\end{equation}

The cardinality of the set is equal to number of non negative solutions of \eqref{eq:cardinality}. To find the number of solutions, this inequality can be broken down into, $k+1$ equalities, as follows,

\begin{equation}
\label{eq:multi-eq}
    \alpha_1 + \alpha_2 +\ldots + \alpha_n = j \; s.t. \; j \in \{1, 2, \ldots , k\}
\end{equation}

Now, it is straight forward to see that for a fixed j, the number of solutions to \eqref{eq:multi-eq} is equal to $\binom{j + n - 1}{n-1}$, where $\binom{x}{y}$ is equal to the number of ways of choosing $y$ items from $x$ identical items. Now, we can easily write the cardinality as the following summation

\begin{equation}\label{eq:car-step-1}
    N_{\mathcal{B}_k} = \sum_{j=0}^{j=k}\binom{j + n - 1}{n-1}
\end{equation}

Consider the following recurrence relation, which is true $\forall a,b \geq 0$

\begin{equation}
    \binom{a}{b} = \binom{a+1}{b+1} - \binom{a}{b+1}
\end{equation},

We can write \eqref{eq:car-step-1} as, where $F(j) = \binom{j + n - 1}{n}$

\begin{equation}
\begin{split}
    N_{\mathcal{B}_k} &= \sum_{j=0}^{j=k}\binom{j + n - 1}{n-1} = \sum_{j=0}^{j=k} \left( \binom{j + n}{n} - \binom{j + n - 1}{n} \right)\\
    &= \sum_{j=0}^{j=k} (F(j+1) - F(j)) = F(k+1) - F(k)\\
    &= \binom{k + n}{n} - \binom{n - 1}{n} = \binom{k + n}{n}
\end{split}
\end{equation}

\end{proof}

\begin{mytheorem}\label{thcp: monomial cross product}
If \(\mathcal{M}_{k_1}(\textbf{\emph{x}})\) and \(\mathcal{M}_{k_2}(\textbf{\emph{x}})\) denote monomial set on \(\textbf{\emph{x}}\) with degree \(k_1\) and \(k_2\) respectively, then the cartesian product of \(\mathcal{M}_{k_1}(\textbf{\emph{x}})\) and \(\mathcal{M}_{k_2}(\textbf{\emph{x}})\), denoted as \(M_{k_1} \times M_{k_2}\) which contains all possible products of elements from $M_{k_1}$ with elements of set $M_{k_2}$, results in monomial set \(\mathcal{M}_{k_1 + k_2}(\textbf{\emph{x}})\) with degree \(k_1+k_2\).
\end{mytheorem}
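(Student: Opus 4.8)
The plan is to prove the set identity $M_{k_1} \times M_{k_2} = \mathcal{M}_{k_1+k_2}(\textbf{x})$ by establishing the two inclusions separately, interpreting $M_{k_1} \times M_{k_2}$ as the \emph{set} of all products $m_1 m_2$ with $m_1 \in \mathcal{M}_{k_1}(\textbf{x})$ and $m_2 \in \mathcal{M}_{k_2}(\textbf{x})$ (so that distinct ordered pairs yielding the same monomial collapse to a single set element, which does not affect the claimed equality). Both sides are subsets of the monomials in $x_1, \ldots, x_n$, so it suffices to match their elements.

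First I would dispatch the inclusion $M_{k_1} \times M_{k_2} \subseteq \mathcal{M}_{k_1+k_2}(\textbf{x})$. Writing $m_1 = \prod_i x_i^{\alpha_i}$ with $\sum_i \alpha_i = k_1$ and $m_2 = \prod_i x_i^{\beta_i}$ with $\sum_i \beta_i = k_2$, multiplication adds exponents coordinate-wise to give $m_1 m_2 = \prod_i x_i^{\alpha_i + \beta_i}$ of total degree $\sum_i (\alpha_i + \beta_i) = k_1 + k_2$, hence $m_1 m_2 \in \mathcal{M}_{k_1+k_2}(\textbf{x})$. This direction is immediate from the definition of the monomial sets.

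The reverse inclusion $\mathcal{M}_{k_1+k_2}(\textbf{x}) \subseteq M_{k_1} \times M_{k_2}$ carries the real content, and I expect it to be the main obstacle. Given an arbitrary $m = \prod_i x_i^{\gamma_i}$ with $\sum_i \gamma_i = k_1 + k_2$, I must exhibit a factorization $m = m_1 m_2$ with $m_1$ of degree $k_1$ and $m_2$ of degree $k_2$; equivalently, I must split each exponent as $\gamma_i = \alpha_i + \beta_i$ with $\alpha_i, \beta_i \geq 0$, $\sum_i \alpha_i = k_1$, and $\sum_i \beta_i = k_2$. I would construct this split greedily: scan the variables in order, accumulating exponent mass into the $\alpha$ part until its running total first reaches $k_1$ (splitting the exponent of the variable at which the threshold is crossed, if necessary), and assign all remaining mass to the $\beta$ part. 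Since the total mass $k_1 + k_2$ is at least $k_1$, the threshold is always reached, and the residual mass equals exactly $k_1 + k_2 - k_1 = k_2$, so both degree constraints hold automatically. The only delicate point is stating the boundary case cleanly — when $k_1$ is met midway through a single variable's exponent — to avoid an off-by-one slip; an induction on $n$ that peels off one variable at a time is an equivalent route, but the direct greedy construction seems cleanest. Combining the two inclusions then yields the desired set equality.
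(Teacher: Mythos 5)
Your proposal is correct and follows essentially the same route as the paper: both arguments reduce the claim to splitting an exponent tuple $(\gamma_1,\ldots,\gamma_n)$ summing to $k_1+k_2$ into nonnegative parts $\alpha_i+\beta_i$ with $\sum_i\alpha_i=k_1$ and $\sum_i\beta_i=k_2$. You are in fact slightly more thorough, since you spell out the easy inclusion $M_{k_1}\times M_{k_2}\subseteq\mathcal{M}_{k_1+k_2}(\textbf{x})$ and give an explicit greedy construction of the tuple $(\alpha_1,\ldots,\alpha_n)$ whose existence the paper merely asserts.
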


\begin{proof}\label{proof_thcp: monomial cross product}
Let the elements of the set \(\mathcal{M}_{k_1}(\textbf{x})\) be denoted by \(x_1^{\alpha_1}...x_n^{\alpha_n}\), elements of the set \(\mathcal{M}_{k_2}(\textbf{x})\) be denoted by \(x_1^{\beta_1}...x_n^{\beta_n}\) and the elements of set \(\mathcal{M}_{k_1+k_2}(\textbf{x})\) be denoted by \(x_1^{\gamma_1}...x_n^{\gamma_n}\). Then, \(\alpha_1+...+\alpha_n = k_1\), \(\beta_1+...+\beta_n = k_2\) and \(\gamma_1+...+\gamma_n = k_1+k_2\). \par
Now, for any \(n-\)tuple \(\{\gamma_1,...,\gamma_n\}\), we can find an \(n-\)tuple \(\{\alpha_1,...,\alpha_n\}\) corresponding to \(\mathcal{M}_{k_1}(\textbf{x})\)  such that \(\gamma_i \geq \alpha_i\; \forall i\). Now, we can choose \(\beta_is\) such that \(\beta_i = \gamma_i - \alpha_i\) and \(\beta_1+...+\beta_n = (\gamma_1 - \alpha_1) +...+(\gamma_n-\alpha_n) = (\gamma_1+...\gamma_n) - (\alpha_1+...\alpha_n) = k_1+k_2 - k_1 = k_2\). Hence, to construct any element \(x_1^{\gamma_1}...x_n^{\gamma_n}\) of the set \(\mathcal{M}_{k_1+k_2}(\textbf{\emph{x}})\), we can find \(x_1^{\alpha_1}...x_n^{\alpha_n}\) and \(x_1^{\beta_1}...x_n^{\beta_n}\) from \(\mathcal{M}_{k_1}(\textbf{x})\) and \(\mathcal{M}_{k_2}(\textbf{x})\) such that their multiplication results in the desired term or cartesian product of \(\mathcal{M}_{k_1}(\textbf{x})\) and \(\mathcal{M}_{k_2}(\textbf{x})\) results in \(\mathcal{M}_{k_1+k_2}(\textbf{x})\).
\end{proof}

\begin{mycorollary}\label{thcp: prod of basis}
If \(\mathcal{B}_{k_1}(\textbf{\emph{x}})\) and \(\mathcal{B}_{k_2}(\textbf{\emph{x}})\) denote basis set on \(\textbf{\emph{x}}\) with degree \(k_1\) and \(k_2\) respectively, then the Cartesian product of \(\mathcal{B}_{k_1}(\textbf{\emph{x}})\) and \(\mathcal{B}_{k_2}(\textbf{\emph{x}})\) results in another basis set \(\mathcal{B}_{k_1 + k_2}(\textbf{\emph{x}})\) with degree \(k_1+k_2\).
\end{mycorollary}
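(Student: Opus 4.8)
The plan is to leverage the set decomposition $\mathcal{B}_k(\textbf{x}) = \bigcup_{i=0}^{k}\mathcal{M}_i(\textbf{x})$ recorded in the assumptions together with Theorem~\ref{thcp: monomial cross product}, so that the corollary reduces to bookkeeping over monomial degrees rather than a fresh combinatorial argument. The idea is that a basis set is just the union of the monomial sets of all degrees up to its order, and the Cartesian product respects this layering.

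First I would expand both factors into unions of monomial sets and push the Cartesian product inside the unions, using the elementary fact that forming all pairwise products distributes over set unions:
\begin{equation*}
\mathcal{B}_{k_1}(\textbf{x}) \times \mathcal{B}_{k_2}(\textbf{x}) = \left(\bigcup_{i=0}^{k_1}\mathcal{M}_i(\textbf{x})\right) \times \left(\bigcup_{j=0}^{k_2}\mathcal{M}_j(\textbf{x})\right) = \bigcup_{i=0}^{k_1}\bigcup_{j=0}^{k_2}\bigl(\mathcal{M}_i(\textbf{x}) \times \mathcal{M}_j(\textbf{x})\bigr).
\end{equation*}
Next, Theorem~\ref{thcp: monomial cross product} identifies each inner term as $\mathcal{M}_i(\textbf{x}) \times \mathcal{M}_j(\textbf{x}) = \mathcal{M}_{i+j}(\textbf{x})$, reducing the right-hand side to $\bigcup_{i=0}^{k_1}\bigcup_{j=0}^{k_2}\mathcal{M}_{i+j}(\textbf{x})$. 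Note that although different $(i,j)$ pairs may generate overlapping monomials, this is harmless since we work with sets and duplicates collapse.

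The remaining step, and the only one requiring care, is to show this double union collapses to the single union $\bigcup_{\ell=0}^{k_1+k_2}\mathcal{M}_\ell(\textbf{x})$, which is by definition $\mathcal{B}_{k_1+k_2}(\textbf{x})$. This amounts to verifying that the set of attainable sums $\{\,i+j : 0 \le i \le k_1,\ 0 \le j \le k_2\,\}$ equals exactly $\{0,1,\ldots,k_1+k_2\}$. The containment ``$\subseteq$'' is immediate because $i+j$ can never exceed $k_1+k_2$; for ``$\supseteq$'' I would exhibit, for each target $\ell$ in that range, an admissible split such as $i=\min(\ell,k_1)$ and $j=\ell-i$, and check that $0 \le j \le k_2$.

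I expect this coverage argument to be the main obstacle, though it is elementary; the only genuine pitfall is remembering to handle the case $\ell > k_1$ separately, so that with $i=k_1$ one still has $j = \ell - k_1 \le k_2$ precisely because $\ell \le k_1+k_2$. Once the index range is pinned down, the double union equals $\bigcup_{\ell=0}^{k_1+k_2}\mathcal{M}_\ell(\textbf{x}) = \mathcal{B}_{k_1+k_2}(\textbf{x})$, which completes the proof.
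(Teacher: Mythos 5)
Your proposal is correct and follows essentially the same route as the paper's own proof: decompose each basis set as a union of monomial sets, distribute the Cartesian product over the unions, apply Theorem~\ref{thcp: monomial cross product} to each pair, and collapse the resulting double union. The only difference is that you explicitly verify that the attainable sums $i+j$ cover all of $\{0,\ldots,k_1+k_2\}$, a step the paper compresses into the sumset notation $K_1+K_2$ without comment.
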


\begin{proof}\label{proof_thcp: prod of basis}
Let $\textbf{K}$ denote the set containing whole numbers less than or equal to $k$, i.e., $\textbf{K} = \{0,\ldots, k\}$. Then,
\begin{equation}
\begin{split}
    \mathcal{B}_{k_1}(\textbf{x}) \times \mathcal{B}_{k_2}(\textbf{x}) &= \bigcup\limits_{i\in K_1}\mathcal{M}_i(\textbf{x}) \times \bigcup\limits_{j\in K_2}\mathcal{M}_j(\textbf{x}) \\
    &=\bigcup\limits_{i,j\in K_1 \times K_2}\left(\mathcal{M}_{i}(\textbf{x}) \times \mathcal{M}_{j}(\textbf{x})\right) \\
    &= \bigcup\limits_{i,j\in K_1 \times K_2} \mathcal{M}_{i+j}(\textbf{x}) \;\;\;\; (Using\; Theorem\; \ref{thcp: monomial cross product})\\
    &= \bigcup\limits_{l \in (K_1+K_2)} \mathcal{M}_{l}(\textbf{x}) \\
    &= \mathcal{B}_{k_1+k_2}(\textbf{\emph{x}})
\end{split}
\end{equation}
\end{proof}

\begin{myclaim}\label{thcp: poly power poly}
Let \(p_{k_1}(\textbf{\emph{x}})\) denotes a polynomial of degree \(k_1\) in \(\textbf{\emph{x}}\). If \(p_{k_1}(\textbf{\emph{x}})\) is transformed by the function  \(f(x) = x^{k_2}\), then the resulting polynomial has a degree \(k_1\cdot k_2\) in \(\textbf{\emph{x}}\).
\end{myclaim}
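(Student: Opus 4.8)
The plan is to prove the two inequalities $\deg\!\big((p_{k_1})^{k_2}\big) \le k_1 k_2$ and $\deg\!\big((p_{k_1})^{k_2}\big) \ge k_1 k_2$ separately, which together force equality. Throughout, I read ``transforming $p_{k_1}$ by $f(x)=x^{k_2}$'' as forming the power $(p_{k_1}(\textbf{x}))^{k_2}$, and ``degree'' as the largest total degree $\sum_i \alpha_i$ occurring among the monomials $\prod_i x_i^{\alpha_i}$ with nonzero coefficient.

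The upper bound follows directly from the machinery already developed. Write $p_{k_1}(\textbf{x})$ as a linear combination of monomials drawn from $\mathcal{B}_{k_1}(\textbf{x})$. Then $(p_{k_1})^{k_2}$ is a finite linear combination of products of $k_2$ monomials, each lying in $\mathcal{B}_{k_1}(\textbf{x})$. Iterating Corollary \ref{thcp: prod of basis} shows every such product lies in $\mathcal{B}_{k_1 k_2}(\textbf{x})$, so every monomial of $(p_{k_1})^{k_2}$ has total degree at most $k_1 k_2$; hence $\deg\!\big((p_{k_1})^{k_2}\big)\le k_1 k_2$.

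For the lower bound I would isolate the top-degree part. Decompose $p_{k_1} = q + r$, where $q$ is the homogeneous component of degree exactly $k_1$ (nonzero, since $p_{k_1}$ has degree $k_1$) and $\deg r \le k_1 - 1$. Expanding by the binomial theorem, every summand of $(q+r)^{k_2}$ other than $q^{k_2}$ contains at least one factor of $r$ and therefore has degree at most $k_1(k_2-1)+(k_1-1) < k_1 k_2$, using Theorem \ref{thcp: monomial cross product} to track the degrees of the monomial products. Consequently the degree-$k_1 k_2$ homogeneous part of $(p_{k_1})^{k_2}$ equals $q^{k_2}$, and it suffices to show $q^{k_2}\neq 0$.

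This last point is the crux, since a priori the top-degree monomials produced by $q^{k_2}$ could cancel. The cleanest route is to invoke that $\mathbb{R}[x_1,\dots,x_n]$ is an integral domain: since $q\neq 0$, its $k_2$-th power is nonzero, so the degree-$k_1 k_2$ part survives and $\deg\!\big((p_{k_1})^{k_2}\big)=k_1 k_2$. If I wanted to avoid citing the integral-domain fact, I would instead fix a graded lexicographic monomial order and use that the leading monomial of a product of polynomials over a field equals the product of the leading monomials; by induction the leading monomial of $(p_{k_1})^{k_2}$ is $\big(\mathrm{LM}(p_{k_1})\big)^{k_2}$, whose total degree is exactly $k_2 \cdot \deg\!\big(\mathrm{LM}(p_{k_1})\big)=k_1 k_2$, settling both bounds simultaneously. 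I expect this no-cancellation step to be the only genuine obstacle; the degree bookkeeping is routine given Theorem \ref{thcp: monomial cross product} and Corollary \ref{thcp: prod of basis}.
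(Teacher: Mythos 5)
Your proof is correct, and it reaches the same destination as the paper's by the same basic strategy (bound the degree above by tracking monomial products, and below by exhibiting a surviving top-degree term), but you are substantially more careful at the one point where care is actually needed. The paper's proof simply says: take ``the monomial term in $p_{k_1}$ with highest power $k_1$,'' multiply it by itself $k_2$ times, and observe that the result has degree $k_1 k_2$ while nothing of higher degree can appear. In the multivariate setting this elides two things: there is generally more than one monomial of top degree $k_1$, and the degree-$k_1 k_2$ contributions arising from different choices of top-degree factors could a priori cancel. Your decomposition $p_{k_1}=q+r$ into the homogeneous top part plus lower-order terms, together with the observation that $q^{k_2}\neq 0$ because $\mathbb{R}[x_1,\dots,x_n]$ is an integral domain (or, equivalently, your graded-lex leading-monomial argument), is exactly the missing justification; the binomial-expansion bookkeeping showing every other summand has degree at most $k_1(k_2-1)+(k_1-1)<k_1k_2$ is also spelled out where the paper only asserts ``clearly there can not exist a monomial term with power higher than $k_1\cdot k_2$.'' In short, your route buys rigor on the no-cancellation step that the paper takes for granted, at the cost of slightly more machinery; the paper's version is shorter but is really only airtight as written for a single dominating monomial.
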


\begin{proof}\label{proof_thcp: poly power poly}
The function \(f(x)\) can be easily seen as \(x\) multiplied with itself \(k_2\) times.
\begin{equation}
    f(x) = x^{k_2} = x \cdot x \ldots x \cdot x \;\;\text{($k_2$ times)}
\end{equation}
Now, if a polynomial is multiplied with itself it must remain a polynomial in the same input. Therefore, \(f(p_{k_1}(\textbf{\emph{x}}))\) will be a polynomial in \(\textbf{x}\). Now, consider the monomial term in \(p_{k_1}(\textbf{\emph{x}})\) with highest power \(k_1\), when this is multiplied with itself \(k_2\) times, it will results in the power \(k_1\cdot k_2\). Clearly, there can not exist a monomial term with power higher than \(k_1\cdot k_2\). Hence, the degree of polynomial  \(f(p_{k_1}(\textbf{\emph{x}}))\) must be \(k_1\cdot k_2\).
\end{proof}

\begin{mytheorem}
Consider an SLNN with $H$ hidden layers and input denoted as $X_{n \times 1}$ and output as $\textbf{Y}_{m \times 1}$. If the activation at the final layer is linear and all the hidden layers are activated with SLAF of degree $k_i$, where $i$ is the index of the hidden layer. Then, the output of this NN can be reparametrized and written as 
\begin{equation}
\label{eq: nn in matrix}
    \textbf{Y}_{m\times 1} = \textbf{W}_{m\times N_{\mathcal{B}_k}}\textbf{X}^{\mathcal{B}_k}_{N_{\mathcal{B}_k}\times 1}
\end{equation}
where, \(k = \prod_{i=1}^{H}k_i\), called as degree of SLNN, \(\textbf{W}_{m \times N_{\mathcal{B}_k}}\) are the new parameters and \(\textbf{X}^{\mathcal{B}_k}_{N_{\mathcal{B}_k}\times 1}\) is the vector containing polynomial features of degree $\leq k$ in $X_{n \times 1}$. The subscripts in the notation denote matrix size. \\ \textbf{Note:} SLAF with degree equal to one is equivalent to linear/no activation. Therefore, this result directly holds for regression tasks.
\end{mytheorem}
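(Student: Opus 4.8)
The plan is to proceed by induction on the hidden-layer index, tracking at each stage the set of monomials that can appear in the vector of signals produced by that layer. The invariant I would maintain is the following: after the affine map and SLAF activation of hidden layer $i$, every component of the signal vector is a polynomial in $X$ whose monomials all lie in $\mathcal{B}_{d_i}(\textbf{x})$, where $d_i = \prod_{j=1}^{i} k_j$. Establishing this invariant immediately yields the theorem, since after the $H$-th hidden layer $d_H = k$, and the final linear layer only takes affine combinations, which cannot raise the degree.

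For the base case I would observe that the pre-activation of the first hidden layer, $W_1 X + b_1$, is affine in $X$, so each component's monomials lie in $\mathcal{B}_1(\textbf{x})$; after applying SLAF of degree $k_1$ --- a polynomial $\sum_{t=0}^{k_1} a_t(\cdot)^t$ --- each term $(\cdot)^t$ raises a degree-$1$ polynomial to a polynomial of degree $t \le k_1$ by Claim \ref{thcp: poly power poly}, so the activated signal has monomials in $\mathcal{B}_{k_1}(\textbf{x}) = \mathcal{B}_{d_1}(\textbf{x})$. For the inductive step, assume the signal entering layer $i$ has monomials in $\mathcal{B}_{d_{i-1}}(\textbf{x})$. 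The affine map $W_i(\cdot) + b_i$ takes linear combinations and adds a constant (a degree-$0$ monomial, itself in $\mathcal{B}_{d_{i-1}}$), so the pre-activation still has monomials in $\mathcal{B}_{d_{i-1}}(\textbf{x})$. Applying SLAF of degree $k_i$ forms powers up to the $k_i$-th; by Claim \ref{thcp: poly power poly} applied to the degree-$d_{i-1}$ pre-activation (equivalently, by repeated application of Corollary \ref{thcp: prod of basis}) each such power lies in $\mathcal{B}_{k_i d_{i-1}}(\textbf{x}) = \mathcal{B}_{d_i}(\textbf{x})$, closing the induction.

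Finally, I would collect coefficients: since every component $Y_j$ is a polynomial whose monomials lie in $\mathcal{B}_k(\textbf{x})$, it is a linear combination of the $N_{\mathcal{B}_k} = \binom{k+n}{n}$ basis elements (Theorem \ref{thcp: cardinality of basis}); arranging these coefficient vectors row-wise gives the matrix $\textbf{W}_{m \times N_{\mathcal{B}_k}}$ and the stated identity $\textbf{Y} = \textbf{W}\,\textbf{X}^{\mathcal{B}_k}$.

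I do not expect a deep obstacle, since the earlier lemmas do the real work. The point requiring the most care is the bookkeeping in the inductive step: one must verify that the affine maps (and their bias terms) never increase the degree, while the SLAF nonlinearity multiplies it by exactly $k_i$, so that the per-layer degrees compose \emph{multiplicatively} to $\prod_{i} k_i$ rather than additively. I would also note explicitly that allowing the SLAF coefficients to vary from node to node within a layer is irrelevant to this degree count, as is the presence of normalization, which only rescales monomials already accounted for in $\mathcal{B}_k(\textbf{x})$.
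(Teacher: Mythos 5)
Your proposal is correct and follows essentially the same route as the paper's proof: an induction over layers in which Claim \ref{thcp: poly power poly} shows the SLAF of degree $k_i$ multiplies the degree of the incoming polynomial by $k_i$, so the degrees compose to $k = \prod_i k_i$, after which the coefficients are collected into $\textbf{W}_{m \times N_{\mathcal{B}_k}}$. Your version is merely more explicit about the base case, the affine maps not raising the degree, and the final coefficient-collection step, all of which the paper treats as immediate.
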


\begin{proof}
As a direct result of claim \ref{thcp: poly power poly}, it is easy to see that any layer of an SLNN can be expressed as a collection of polynomial in SLNN's inputs. Without loss of generality, let the degree of the polynomial obtained as the output of \(i\)th layer be \(d_i\). Now, if degree of SLAF used in \((i+1)\)th is \(k_{i+1}\), then its output will be a polynomial of degree \(d_{i+1} = d_i \cdot k_{i+1}\) (using claim \ref{thcp: poly power poly}). Now, given that $d_0 = 1$, each output node of SLNN denoted by $y_i$ is expressible as a polynomial of degree \(k = \prod_{i=1}^{H}k_i\) and therefore can be reparametrized as
\begin{equation*}
    y_i = \textbf{W}^i_{1\times N_{\mathcal{B}_k}} \textbf{X}^{\textbf{}}_{N_{\mathcal{B}_k} \times 1}
\end{equation*}
Or,
\begin{equation*}
    \textbf{Y}_{m\times 1} = \textbf{W}_{m\times N_{\mathcal{B}_k}}\textbf{X}^{\mathcal{B}_k}_{N_{\mathcal{B}_k}\times 1}
\end{equation*}

where $\textbf{W}_{m\times N_{\mathcal{B}_k}}$ is a matrix with constants which can be easily obtained from the weights of the SLNN.
\end{proof}

\begin{mytheorem}\label{th: nn equivalence}

A neural network with SLAF can approximate any neural network architecture given its input domain is bounded and the activation function $F$ is Lipschitz continuous, to any desired degree of error as a function of degree of SLAF.

\end{mytheorem}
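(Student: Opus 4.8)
The plan is to exploit the fact that a degree-$k$ SLAF is exactly a polynomial of degree $k$, and then to lean on the Weierstrass approximation theorem to match the target activation $F$ on each layer before propagating the resulting errors through the network. First I would set up an induction over layers. Because the input domain is bounded (a compact subset of $\mathbb{R}^n$) and the weights and biases of the target network are fixed finite constants, the pre-activations entering the first layer lie in some compact interval $I_1$. On $I_1$, Lipschitz continuity of $F$ makes it in particular continuous, so by the Weierstrass theorem there is a polynomial $p_1$ of degree $k_1$ with $\sup_{z \in I_1}|F(z) - p_1(z)| \le \epsilon_1$, where $\epsilon_1 \to 0$ as $k_1 \to \infty$. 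Choosing the SLAF at the first layer equal to $p_1$ makes its output differ from the true layer output by at most $\epsilon_1$ per neuron.

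Next I would carry the induction forward. Suppose the SLNN's activation at layer $i$ differs from the target network's by at most $\delta_i$ in each coordinate. The pre-activation of layer $i+1$ in the SLNN then differs from the target's by at most a factor bundling the weight magnitude $\|W_{i+1}\|$ and the width of layer $i$ (summing the per-neuron errors), and this perturbed pre-activation still lies in a slightly enlarged compact interval $I_{i+1}$. Applying Weierstrass on $I_{i+1}$ gives a degree-$k_{i+1}$ polynomial $p_{i+1}$ with sup-error $\epsilon_{i+1}$, and combining the Lipschitz bound on $F$ (to absorb the mismatch caused by the perturbed input) with the polynomial error yields a recurrence of the form $\delta_{i+1} \le L\, c_{i+1}\, \delta_i + \epsilon_{i+1}$, where $L$ is the Lipschitz constant and $c_{i+1}$ encodes the weight magnitude and layer width. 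Unrolling this recurrence expresses the final-layer error as a finite sum of the per-layer approximation errors $\epsilon_i$, each scaled by products of $L$, the weight norms, and the widths --- exactly the dependence on the ``magnitude of weights and width of each layer'' anticipated in Section~\ref{sec: representation}. Since every $\epsilon_i$ can be driven to zero by increasing the corresponding SLAF degree $k_i$, the total error can be made arbitrarily small.

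The main obstacle I anticipate is controlling the domain on which each polynomial must approximate $F$. Because polynomials are unbounded, a degree-$k$ approximant that is uniformly $\epsilon$-close to $F$ on an interval $I_{i+1}$ may deviate wildly just outside it, so I must ensure the SLNN's perturbed pre-activations never escape the interval on which $p_{i+1}$ was constructed. I would handle this by fixing the target accuracy first, choosing the enlargement of each $I_{i+1}$ generously (using boundedness of the target pre-activations plus a fixed margin) so that for all sufficiently small cumulative errors the perturbed inputs stay inside, and only then selecting the degrees $k_i$ large enough to meet the accuracy on those fixed enlarged intervals. The remaining subtlety is pure bookkeeping: propagating a per-neuron sup-error through a linear layer introduces the width- and weight-dependent factors, so the constants must be tracked carefully to obtain the stated dependence, but this is routine once the domain issue is pinned down.
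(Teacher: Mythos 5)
Your proposal is correct and follows essentially the same route as the paper's proof: Weierstrass approximation on a compact interval containing the (perturbed) pre-activations of each layer, Lipschitz continuity to absorb the propagated input mismatch, and the layer-wise recurrence $\epsilon_{k+1} = K\bigl(\Sigma_i |w^k_{ij}|\bigr)\epsilon_k + \delta^{k+1}$ with $\epsilon_0 = 0$, unrolled to make the final error vanish as the degrees grow. Your explicit handling of the domain-enlargement issue (fixing the intervals before choosing the degrees) is in fact slightly more careful than the paper's treatment, which only bounds the approximation interval via the weights and widths.
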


\begin{proof}

First, let us look at Weierstrass Approximation Theorem. It states that for any continuous and real valued function $f(x)$ defined on the interval $[a, b]$, for every $\delta > 0$, there exists polynomial $p(x)$ s.t. for $\forall x \in [a, b]$, we have
\begin{equation}\label{eq: stone weierstrass}
    |f(x) - p(x)| < \delta
\end{equation}

It is also well known that if the function f(x) is not a polynomial, then the degree of the polynomial $p(x)$ approaches infinity as $\delta$ approaches zero. Let's denote the approximation error by $\delta_d$ if the polynomial, $p(x)$ has degree less than equal to $d$. Then for a fixed $f(x)$, on the interval $[a, b]$, it is easy to see that, 

\begin{equation}\label{eq: sw inequality}
    \delta_0 \geq \delta_1 \geq \ldots \geq \delta_\infty
\end{equation}

% A function $f(x)$ which is continuous and differentiable can be approximated arbitrarily well with a Taylor basis and the approximation error is given by lagrange error bound. \textbf{Lagrange error bound} says that the worst case error evaluated at $x$ between a function estimated using Taylor polynomial basis (centered around $a$) containing elements of degree less than equal to $n$ and the true function $f(x)$, is upper bounded by $R_n$, where $R_n$ is given as:
% \begin{equation}\label{eq:lagrange error bound}
%     |R_n(x)| \leq \frac{M}{(n+1)!}|x-a|^{n+1} =  \frac{\max_{c \in [a, x]}|{f^{n+1}(c)}|}{(n+1)!}|x-a|^{n+1}
% \end{equation}

% In the eq.\ref{eq:lagrange error bound}, $M \leq |f^{n+1}(c)| \; \forall \; c \in [a, x]$, where $f^{n+1}(x)$ is the $(n+1)^\text{th}$ derivative of function $f(x)$. The remainder $R_n(x)$ is decreasing for all functions $f$ which are analytc.

Consider a neural network with activation function $F(x)$ which follows lipschitz continuity. Let's assume K to be the lipschitz constant for $F(x) : \mathbb{R} \to \mathbb{R}$   s.t. it follows:
\begin{equation}\label{eq: lipschitz cond}
    |F(x_1) - F(x_2)| \leq K|x_1 - x_2|
\end{equation}
The $k^\text{th}$ layer of the NN has $U_k$ hidden units and its linear component be denoted by $h^k(x)$ followed by activation $\phi^k(x)$ which follows:
\begin{gather}
    h^{k+1}_j = \Sigma_{i=1}^{U_k}w^{k}_{ij}\phi^k_i + b^k_j\\
    \phi^{k+1}_j = F(h^{k+1}_j)
\end{gather}

Now consider an Approximate NN (ANN) with all activations replaced by different approximations based on polynomial approximation denoted by $\Tilde{F}$. The $k$th layer with $U_k$ hidden units has a linear component $\Tilde{h}^k(x)$ followed by activation $\Tilde{\phi}^k(x)$ which follows:
\begin{equation}
    \Tilde{h}^{k+1}_j = \Sigma_{i=1}^{U_k}w^{k}_{ij}\Tilde{\phi}^k_i + b^k_j
\end{equation}
\begin{equation}
    \Tilde{\phi}^{k+1}_j = \Tilde{F}(\Tilde{h}^{k+1}_j)
\end{equation}

Now, to get the recursion in error propagated from $k^\text{th}$ layer to $(k+1)^\text{th}$ layer, assume that the approximation error at $k^\text{th}$ layer at activation is upper bounded by $\epsilon_k$ $\forall i \in \{1, \ldots , U_k\}$. Then we can write,
\begin{gather}
    |\phi^{k}_i(x) - \Tilde{\phi}^{k}_i(x)| \leq \epsilon_k\\
    -\epsilon_k \leq \phi^{k}_i(x) - \Tilde{\phi}^{k}_i(x) \leq \epsilon_k
\end{gather}
Now, 
\begin{gather}
    h^{k+1}_j - \Tilde{h}^{k+1}_j = \Sigma_i w^{k}_{ij}(\phi^k_i - \Tilde{\phi}^k_i)\\
    -\Sigma_i |w^{k}_{ij}|\epsilon_k \leq h^{k+1}_j - \Tilde{h}^{k+1}_j \leq  \Sigma_i |w^{k}_{ij}|\epsilon_k \\ \label{eq: hidden layer bound}
    |\Delta h^{k+1}_j| = |h^{k+1}_j - \Tilde{h}^{k+1}_j| \leq \Sigma_i |w^{k}_{ij}|\epsilon_k
\end{gather}

Now, consider $\Delta\phi^{k+1}_j = \phi^{k+1}_j(x) - \Tilde{\phi}^{k+1}_j(x)$,

\begin{equation}
    \Delta{\phi}^{k+1}_j = F(h^{k+1}_j) - \Tilde{F}(\Tilde{h}^{k+1}_j)
\end{equation}

Now from Weierstrass approximation theorem we know that for every $\delta^{k+1}$ there exists a polynomial of degree $d^{j, k}$ $\forall j \in \{1, \ldots , U_{k+1}\}$ denoted by $\Tilde{F}(x)$ which will satisfy $\forall x \in \; I^{j, k+1}_{\text{apprx}} \;: [\min \Tilde{h}^{k+1}_j, \max \Tilde{h}^{k+1}_j]$:
\begin{gather}\label{eq: apprx bound}
    |F(x) - \Tilde{F}(x)| \leq  \delta^{k+1}\\ \label{eq: limit degree}
    s.t. \;\; \lim_{\delta^{k+1} \to 0} d^{j,k} = \infty \;\; (\text{for non polynomial $F(x)$})\\
    h^{k+1}_j - \Sigma_i| w^{k}_{ij}|\epsilon_k = \Sigma_i w^{k}_{ij}\phi^k_i -\Sigma_i| w^{k}_{ij}|\epsilon_k \leq \Tilde{h}^{k+1}_j \leq \Sigma_i w^{k}_{ij}\phi^k_i +\Sigma_i |w^{k}_{ij}| \epsilon_k = h^{k+1}_j + \Sigma_i| w^{k}_{ij}|\epsilon_k
\end{gather}
 The interval over which approximation holds $I^{j, k+1}_{\text{apprx}}$ is easy to calculate for bounded activations. For $\phi \in [\phi_{low}, \phi_{high}]$, we can write
 \begin{gather} \label{eq: bound upper}
     \min \Tilde{h}^{k+1}_j = \Sigma_i w^{k}_{ij}\phi^k_i -\Sigma_i |w^{k}_{ij}| \epsilon_k \geq U_k \cdot \min_i (w^k_{ij}*\phi_{i}) -\Sigma_i |w^{k}_{ij}| \epsilon_k  \\ \label{eq: bound lower}
     \max \Tilde{h}^{k+1}_j = \Sigma_i w^{k}_{ij}\phi^k_i +\Sigma_i |w^{k}_{ij}| \epsilon_k \leq U_k \cdot \max_i (w^k_{ij}*\phi_{i}) + \Sigma_i |w^{k}_{ij}| \epsilon_k
 \end{gather}
 
 These bounds on the interval of approximation provides a method to relate the approximation error with the SLNN and NN attributes. It is easy to see that for a fixed $F(x)$, from eqn. \ref{eq: apprx bound}, \ref{eq: bound upper} and \ref{eq: bound lower}, larger width or weights would expand the interval of approximation. This would mean a higher degree of polynomial would be needed so as to maintain same approximation error.
 
We will now drop the super scripts and sub scripts for sake of clarity. The length of $I_{\text{apprx}}$, depends upon the weights in layer, width of the hidden layer and the range of $\Tilde{\phi}^{k}$. It is easy to see that as the width increases the length of $I_{\text{apprx}}$ increases requiring higher polynomial degree to maintain approximation error. Therefore, the degree of the polynomial in SLNNs acts as a sort of proxy for width in standard NNs. Note that this bound would still hold even if the activation function is unbounded (for ReLU, SeLU, ELU etc) since the input domain is restricted. Now, we can write:
\begin{gather*}
 -\delta^{k+1} \leq F(\Tilde{h}) - \Tilde{F}(\Tilde{h}) \leq \delta^{k+1}\\
 F(h) - F(\Tilde{h}) - \delta^{k+1} \leq F(h) - \Tilde{F}(\Tilde{h}) \leq F(h) - F(\Tilde{h}) + \delta^{k+1} \\
\end{gather*}
But from eq. \ref{eq: lipschitz cond} and \ref{eq: hidden layer bound}, we also 
have

\begin{equation}
     -K|\Delta h| \leq F(h) - F(\Tilde{h}) \leq K|\Delta h|
\end{equation}
Hence, we get
\begin{gather*}
    -K|\Delta h| -\delta^{k+1} \leq F(h) - \Tilde{F}(\Tilde{h}) \leq K|\Delta h| + \delta^{k+1}\\
    |F(h^{k+1}_j) - \Tilde{F}(\Tilde{h}^{k+1}_{j})| = |\phi^{k+1}_j - \Tilde{\phi}^{k+1}_j| \leq K|\Delta h| + \delta^{k+1} 
\end{gather*}
\begin{equation}\label{eq: error bound}
    |\phi^{k+1}_j - \Tilde{\phi}^{k+1}_j| \leq K|\Sigma_i |w^{k}_{ij}| \epsilon_k| + \delta^{k+1} = \epsilon_{k+1}
\end{equation}
    
From eq. \eqref{eq: error bound}, we can see the recursive expression of approximation error is a function of $N$, the degree of polynomial used for approximation $d^{j,k}$. Since, $\epsilon_0 = 0$ (for the input layer), the expression for $\Phi^{k+1}$ would be proportional to $\delta^{k+1}$. This means that by varying $d^{j,k}$, any approximation error can be achieved.
\end{proof}

\end{document}